\newtheorem{proposition}{Proposition}
\newtheorem{corollary}{Corollary}
\newtheorem{theorem}{Theorem}
\newcommand{\vs}[1]{\vspace*{-#1mm}}
\newcommand{\one}{\mathbf{1}}
\newcommand{\sigm}{{\rm sigm}}
\newcommand{\splus}{{\rm s}_+}
\newcommand*{\defeq}{\stackrel{\text{def}}{=}}
\title{Estimating or Propagating Gradients Through Stochastic Neurons
for Conditional Computation}
\newif\ifLong
\newif\ifConvergence
\author{
Yoshua Bengio, Nicholas Léonard and Aaron Courville\\
Département d'informatique et recherche opérationnelle\\
Université de Montréal\\
}
\begin{document}

\maketitle
\vs{3}
\begin{abstract}
  Stochastic neurons and hard non-linearities can be useful for a number of
  reasons in deep learning models, but in many cases they pose a
  challenging problem: how to estimate the gradient of a loss function with
  respect to the input of such stochastic or non-smooth neurons? I.e., can
  we ``back-propagate'' through these stochastic neurons?  We examine this
  question, existing approaches, and compare four families of
  solutions, applicable in different settings.  One of them is the
  minimum variance unbiased gradient estimator for stochatic binary neurons
  (a special case of the REINFORCE algorithm).
  A second approach, introduced here,
  decomposes the operation of a binary stochastic neuron into a
  stochastic binary part and a smooth differentiable part, which
  approximates the expected effect of the pure stochatic binary neuron to
  first order. A third approach involves the 
  injection of additive or multiplicative noise in a computational graph that 
is otherwise differentiable. A fourth approach heuristically copies the
gradient with respect to the stochastic output directly as an estimator of the gradient
with respect to the sigmoid argument (we call this the straight-through estimator).
To explore a context where these estimators are useful, 
we consider a small-scale version of 
{\em conditional computation}, where sparse stochastic units form a
distributed representation of gaters that can turn off in combinatorially
many ways large chunks of the computation performed in the rest of the
neural network. In this case, it is important that the gating units produce
an actual 0 most of the time. The resulting sparsity can be potentially be exploited to greatly reduce the computational cost of large deep networks for which conditional computation would be useful.
\end{abstract}

\vs{3}
\section{Introduction and Background}
\vs{2}

Many learning algorithms and in particular those based on neural networks 
or deep learning rely on gradient-based learning. To compute exact gradients,
it is better if the relationship between parameters and the training objective
is continuous and generally smooth. If it is only constant by parts, i.e., mostly flat, then
gradient-based learning is impractical. This was what motivated the
move from neural networks based on so-called formal neurons, with a hard
threshold output, to neural networks whose units are based on a
sigmoidal non-linearity, and the well-known back-propagation 
algorithm to compute the gradients~\citep{Rumelhart86b}.

We call the {\em computational graph} or {\em flow graph} the graph that
relates inputs and parameters to outputs and training criterion.  Although
it had been taken for granted by most researchers that smoothness of this
graph was a necessary condition for exact gradient-based training methods
to work well, recent successes of deep networks with rectifiers and other
``non-smooth''
non-linearities~\citep{Glorot+al-AI-2011-small,Krizhevsky-2012-small,Goodfellow+al-ICML2013-small}
clearly question that belief: see Section~\ref{sec:semi-hard} for a deeper
discussion.

In principle, even if there are hard decisions (such as the
treshold function typically found in formal neurons) in the computational
graph, it is possible to obtain {\em estimated gradients} by introducing
{\em perturbations} in the system and observing the effects. Although
finite-difference approximations of the gradient 
appear hopelessly inefficient (because independently perturbing
each of $N$ parameters to estimate its gradient would be $N$ times
more expensive than ordinary back-propagation), another option
is to introduce {\em random perturbations}, and this idea has
been pushed far (and experimented on neural networks for control)
by \citet{Spall-1992} with the
Simultaneous Perturbation Stochastic Approximation (SPSA) algorithm.

As discussed here (Section~\ref{sec:semi-hard}), non-smooth non-linearities
and stochastic perturbations can be combined to obtain reasonably
low-variance estimators of the gradient, and a good example of that success
is with the recent advances with {\em
  dropout}~\citep{Hinton-et-al-arxiv2012,Krizhevsky-2012,Goodfellow+al-ICML2013-small}. The
idea is to multiply the output of a non-linear unit by independent binomial
noise.  This noise injection is useful as a regularizer and it does slow
down training a bit, but not apparently by a lot (maybe 2-fold), which is very
encouraging. The symmetry-breaking and induced sparsity may also compensate
for the extra variance and possibly help to reduce ill-conditioning, as
hypothesized by~\citet{Bengio-arxiv-2013}.

However, it is appealing to consider noise whose amplitude can be modulated
by the signals computed in the computational graph, such as with
{\em stochastic binary neurons}, which output a 1 or a 0 according
to a sigmoid probability. Short of computing an average over an
exponential number of configurations, it would seem that computing the
exact gradient (with respect to the average of the loss over all possible
binary samplings of all the stochastic neurons in the neural network)
is impossible in such neural networks. The question is whether good
estimators (which might have bias and variance) or similar alternatives
can be computed and yield effective training. We discuss and compare here
four reasonable solutions to this problem, present theoretical
results about them, and small-scale experiments to validate that training
can be effective, in the context where one wants to use such stochastic
units to gate computation. The motivation, described further below, is
to exploit such sparse stochastic gating units for {\em conditional
computation}, i.e., avoiding to visit every parameter for every example,
thus allowing to train potentially much larger models for the same
computational cost.

\vs{2}
\subsection{More Motivations and Conditional Computation}
\vs{2}

One motivation for studying stochastic neurons is that stochastic behavior
may be a required ingredient in {\em modeling biological neurons}. The
apparent noise in neuronal spike trains could come from an actual noise
source or simply from the hard to reproduce changes in the set of input
spikes entering a neuron's dendrites. Until this question is resolved by
biological observations, it is interesting to study how such noise -- which
has motivated the Boltzmann machine~\citep{Hinton84} -- may impact
computation and learning in neural networks.

Stochastic neurons with binary outputs are also interesting because
they can easily give rise to {\em sparse representations} (that have many zeros), 
a form of regularization that has been used in many representation
learning algorithms~\citep{Bengio-Courville-Vincent-TPAMI2013}.
Sparsity of the representation corresponds to the prior that, for a given
input scene, most of the explanatory factors are irrelevant (and that
would be represented by many zeros in the representation). 

As argued by~\citet{Bengio-arxiv-2013}, sparse representations may
be a useful ingredient of {\em conditional computation}, by which
only a small subset of the model parameters are ``activated'' (and need
to be visited) for any particular example, thereby greatly reducing the
number of computations needed per example. Sparse gating units may
be trained to select which part of the model actually need to be computed
for a given example.

Binary representations are also useful as keys for a hash table, as in the
semantic hashing algorithm~\citep{Salakhutdinov+Geoff-2009}.  
Trainable stochastic neurons would also be useful inside recurrent networks
to take hard stochastic decisions about temporal events at different
time scales. This would be useful to train multi-scale temporal hierarchies
\footnote{Daan Wierstra, personal communication} such that back-propagated
gradients could quickly flow through the slower time scales. Multi-scale
temporal hierarchies for recurrent nets have already been proposed and
involved exponential moving averages of different time
constants~\citep{ElHihi+Bengio-nips8}, where each unit is still updated
after each time step.  Instead, identifying key events at a high level of
abstraction would allow these high-level units to only be updated when
needed (asynchronously), creating a fast track (short path) for gradient
propagation through time.

\vs{2}
\subsection{Prior Work}
\vs{2}

The idea of having stochastic neuron models is of course very old, with one
of the major family of algorithms relying on such neurons being the
Boltzmann machine~\citep{Hinton84}.  Another biologically motivated
proposal for synaptic strength learning was proposed by
\citet{Fiete+Seung-2006}. It is based on small zero-mean {\em i.i.d.}
perturbations applied at each stochastic neuron potential (prior to a
non-linearity) and a Taylor expansion of the expected reward as a function
of these variations. \citet{Fiete+Seung-2006} end up proposing a gradient
estimator that looks like a {\em correlation between the reward and the
  perturbation}, very similar to that presented in
Section~\ref{sec:unbiased}, and which is a special case of the
REINFORCE~\citep{Williams-1992} algorithm. However, unlike REINFORCE, their
estimator is only unbiased in the limit of small perturbations.

Gradient estimators based on stochastic perturbations have long been shown
to be much more efficient than standard finite-difference
approximations~\citep{Spall-1992}. Consider $N$ quantities $u_i$ to 
be adjusted in order to minimize an expected loss $L(u)$. A finite difference
approximation is based on measuring separately the effect of changing each
one of the parameters, e.g., through $\frac{L(u) - L(u - \epsilon e_i)}{\epsilon}$,
or even better, through $\frac{L(u + \epsilon e_i) - L(u - \epsilon e_i)}{2 \epsilon}$,
where $e_i = (0,0,\cdots,1,0,0,\cdots,0)$ where the 1 is at position $i$. With $N$
quantities (and typically $O(N)$ computations to calculate $L(u)$), the
computational cost of the gradient estimator is $O(N^2)$. Instead, a perturbation-based
estimator such as found in 
Simultaneous Perturbation Stochastic Approximation (SPSA) \citep{Spall-1992}
chooses a random perturbation vector $z$ (e.g., isotropic Gaussian noise
of variance $\sigma^2$) and estimates the gradient of the expected loss with
respect to $u_i$ through $\frac{L(u + z) - L(u - z)}{2 z_i}$.
So long as the perturbation does not put too much probability around 0,
this estimator is as efficient as the finite-difference estimator but
requires $O(N)$ less computation. However, like the algorithm proposed
by \citet{Fiete+Seung-2006} this estimator becomes unbiased only
as the perturbations go towards 0. When we want to consider all-or-none
perturbations (like a neuron sending a spike or not), it is not clear
if these assumptions are appropriate. An advantage of the approaches
proposed here is that they do not require that the perturbations
be small to be valid.

\vs{3}
\section{Non-Smooth Stochastic Neurons}
\label{sec:semi-hard}
\vs{2}

One way to achieve gradient-based learning in networks of stochastic neurons 
is to build an architecture in which noise is injected so that gradients
can sometimes flow into the neuron and can then adjust it (and its predecessors
in the computational graph) appropriately.

In general, we can consider the output $h_i$ of a stochastic neuron as the
application of a deterministic function that depends on some noise source $z_i$
and on some differentiable
transformation $a_i$ of its inputs
(typically, a vector containing the outputs of other neurons) and internal parameters
(typically the bias and incoming weights of the neuron):
\begin{equation}
\label{eq:noisy-output}
  h_i = f(a_i,z_i) 
\end{equation}
So long as $f(a_i,z_i)$ in the above equation has a non-zero
gradient with respect to $a_i$, gradient-based learning 
(with back-propagation to compute gradients) can proceed.
In this paper, we consider the usual affine transformation 
$a_i = b_i + \sum_j W_{ij} x_{ij}$, where $x_i$ is the vector
of inputs into neuron $i$.

For example, if the noise $z_i$ is added or multiplied somewhere in the
computation of $h_i$, gradients can be computed as usual. Dropout
noise~\citep{Hinton-et-al-arxiv2012} and masking noise (in denoising
auto-encoders~\citep{VincentPLarochelleH2008-small}) is multiplied (just
after the neuron non-linearity), while in semantic
hashing~\citep{Salakhutdinov+Geoff-2009} noise is added (just before the
non-linearity). For example, that noise can be binomial (dropout, masking
noise) or Gaussian (semantic hashing).

However, if we want $h_i$ to be binary (like in stochastic binary neurons), 
then $f$ will have derivatives that are 0 almost everywhere (and infinite
at the threshold), so that gradients can never flow. 

There is an intermediate option that we put forward here: choose $f$ so
that it has two main kinds of behavior, with zero derivatives in some
regions, and with significantly non-zero derivatives in other regions.  
We call these two states of the neuron respectively the insensitive state
and the sensitive state. 

\vs{3}
\subsection{Noisy Rectifier}
\vs{2}

A special case is when the insensitive state corresponds to $h_i=0$ and we
have sparsely activated neurons. The prototypical example of that situation
is the rectifier unit~\citep{Hinton2010,Glorot+al-AI-2011-small}, whose
non-linearity is simply $\max(0,{\rm arg})$, i.e.,
\[
  h_i = \max(0, z_i + a_i)
\]
where $z_i$ is a zero-mean noise. Although in practice Gaussian noise is
appropriate, interesting properties can be shown for $z_i$ sampled from
the logistic density $p(z)=\sigm(z)(1-\sigm(z))$, where $\sigm(\cdot)$
represents the logistic sigmoid function.

In that case, we can prove nice properties of the noisy rectifier.
\begin{proposition}
\label{thm:noisy-rectifier}
The noisy rectifier with logistic noise has the following properties: (a) $P(h_i>0)=\sigm(a_i)$,
(b) $E[h_i]=\splus(a_i)$, where $\splus(x)=\log(1+\exp(x))$ is the softplus function
and the random variable is $z_i$ (given a fixed $a_i$).
\end{proposition}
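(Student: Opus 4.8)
The plan is to exploit the one special feature of logistic noise: its density $p(z)=\sigm(z)(1-\sigm(z))$ is precisely the derivative of the logistic sigmoid, so $\sigm$ itself is the cumulative distribution function of $z_i$, i.e.\ $P(z_i \le t)=\sigm(t)$. This single observation drives both parts, and I would state it first as a lemma-free remark.

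For part (a), I would note that $h_i>0$ holds exactly when $z_i+a_i>0$, that is, when $z_i>-a_i$. Hence $P(h_i>0)=P(z_i>-a_i)=1-\sigm(-a_i)$, and the antisymmetry identity $1-\sigm(-x)=\sigm(x)$, an immediate consequence of $\sigm(x)=1/(1+e^{-x})$, yields $P(h_i>0)=\sigm(a_i)$.

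For part (b), I would write the expectation as $E[h_i]=\int_{-a_i}^{\infty}(z+a_i)\,\sigm(z)(1-\sigm(z))\,dz$, the integrand vanishing wherever $z+a_i\le 0$ since then $\max(0,z+a_i)=0$. Recognizing $\sigm(z)(1-\sigm(z))\,dz=d\,\sigm(z)$, I would integrate by parts. The crucial step is the choice of antiderivative: taking $v=\sigm(z)$ directly makes the boundary term at $+\infty$ diverge (cancelling a divergent integral), so instead I would use $v=\sigm(z)-1=-\sigm(-z)$, which decays like $e^{-z}$. With this choice both boundary contributions vanish (at $z=-a_i$ because $z+a_i=0$, and at $+\infty$ because $(z+a_i)\,\sigm(-z)\to 0$), leaving $E[h_i]=\int_{-a_i}^{\infty}(1-\sigm(z))\,dz=\int_{-a_i}^{\infty}\sigm(-z)\,dz$.

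Finally I would evaluate this integral by observing that $-\log(1+e^{-z})$ is an antiderivative of $\sigm(-z)$; evaluated between $-a_i$ and $+\infty$ it gives $\log(1+e^{a_i})=\splus(a_i)$, as claimed. The main obstacle throughout is the delicate handling of the boundary-at-infinity term in the integration by parts, where a naive antiderivative produces a divergence; everything else is routine once the sigmoid-as-CDF remark is in hand.
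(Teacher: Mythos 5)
Your proof is correct, and its skeleton matches the paper's: part (a) is proved identically (sigmoid as the logistic CDF plus the antisymmetry $1-\sigm(-x)=\sigm(x)$), and part (b) is again an integration by parts against $d\,\sigm(z)$. The one genuine difference is how the boundary term at $+\infty$ is tamed. The paper keeps the natural antiderivative $\sigm$, works with a finite upper limit $t$ (after the change of variable $x=z+a_i$), and lets two individually divergent quantities cancel: $\left[x\,\sigm(x-a_i)-\splus(x-a_i)\right]$ tends to $a_i$ as $x\to\infty$ because $\splus(u)\to u$, after which the identity $a_i+\splus(-a_i)=\splus(a_i)$ finishes the computation. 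You instead shift the antiderivative to $v=\sigm(z)-1=-\sigm(-z)$, so that both boundary contributions vanish outright (exponential decay at $+\infty$, the factor $z+a_i$ at the lower limit), and then evaluate the clean remaining integral $\int_{-a_i}^{\infty}\sigm(-z)\,dz=\splus(a_i)$ with an explicit antiderivative. Your variant buys a tidier argument — no limit of cancelling divergences and no need for the asymptotic fact about $\splus$ — while the paper's version makes the softplus appear directly inside the bracket, which is perhaps more suggestive of where the answer comes from; the two are otherwise the same calculation.
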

\begin{proof}
\begin{eqnarray*}
 P(h_i>0) &=& P(z_i>-a_i) = 1-\sigm(-a_i)\\
  &=& 1 - \frac{1}{1 + e^{a_i}} = \frac{1}{1+e^{-a_i}} = \sigm(a_i)
\end{eqnarray*}
which proves (a). For (b), we use the facts $\frac{d \,\sigm(x)}{dx}=\sigm(x)(1-\sigm(x))$,
$\splus(u) \rightarrow u$ as $u\rightarrow\infty$,
change of variable $x=a_i+z_i$ and integrate by parts:
\begin{eqnarray*}
 E[h_i] &=& \lim_{t\rightarrow\infty} \int_{z_i>-a_i}^t (a_i+z_i) \sigm(z_i)(1-\sigm(z_i) dz_i \\
    &=& \lim_{t\rightarrow\infty} \left(\left[x \,\sigm(x-a_i)\right]_0^t - \int_0^t \sigm(x-a)dx \right)\\
    &=& \lim_{t\rightarrow\infty} \left[x \,\sigm(x-a_i)-\splus(x-a_i)\right]_0^t \\
    &=& a_i + \splus(-a_i) = \splus(a_i)
\end{eqnarray*}
\end{proof}

Let us now consider two cases:
\begin{enumerate}
\item If $f(a_i,0)>0$, the basic state is {\em active},
the unit is generally sensitive and non-zero,
but sometimes it is shut off (e.g., when $z_i$ is sufficiently negative to push 
the argument of the rectifier below 0). In that case gradients will flow
in most cases (samples of $z_i$). If the rest of the system sends the signal 
that $h_i$ should have been smaller, then gradients will push it towards
being more often in the insensitive state.
\item If $f(a_i,0)=0$, the basic state is {\em inactive},
  the unit is generally insensitive and zero,
  but sometimes turned on (e.g., when $z_i$ is sufficiently positive to push the
  argument of the rectifier above 0). In that case gradients will not flow
  in most cases, but when they do, the signal will either push the weighted
  sum lower (if being active was not actually a good thing for that unit in
  that context) and reduce the chances of being active again, or it will
  push the weight sum higher (if being active was actually a good thing for
  that unit in that context) and increase the chances of being active
  again.
\end{enumerate}

So it appears that even though the gradient does not always flow
(as it would with sigmoid or tanh units), it might flow sufficiently
often to provide the appropriate training information. The important
thing to notice is that even when the basic state (second case, above)
is for the unit to be insensitive and zero, {\em there will be
an occasional gradient signal} that can draw it out of there.

One concern with this approach is that one can see an asymmetry between
the number of times that a unit with an active state can get a chance
to receive a signal telling it to become inactive, versus the number of
times that a unit with an inactive state can get a signal telling
it to become active. 

Another potential and related concern is that some of these units will
``die'' (become useless) if their basic state is inactive in the vast
majority of cases (for example, because their weights pushed them into that
regime due to random fluctations). Because of the above asymmetry, dead
units would stay dead for very long before getting a chance of being born
again, while live units would sometimes get into the death zone by chance
and then get stuck there. What we propose here is a simple mechanism to
{\em adjust the bias of each unit} so that in average its ``firing rate''
(fraction of the time spent in the active state) reaches some pre-defined
target. For example, if the moving average of being non-zero falls below a
threshold, the bias is pushed up until that average comes back above the
threshold.

\vs{3}
\subsection{STS Units: Stochastic Times Smooth}
\vs{2}

We propose here a novel form of stochastic unit that is related to
the noisy rectifier and to the stochastic binary unit, but that can
be trained by ordinary gradient descent with the gradient obtained
by back-propagation in a computational graph. We call it the
STS unit (Stochastic Times Smooth).
With $a_i$ the
activation before the stochastic non-linearity, the output of
the STS unit is
\begin{align}
\label{eq:sts-unit}
 p_i =& \; \sigm(a_i) \nonumber \\
 b_i \sim& \; {\rm Binomial}(\sqrt{p_i}) \nonumber \\
 h_i =& \; b_i \sqrt{p_i}
\end{align}

\begin{proposition}
\label{thm:sts-unit}
The STS unit benefits from the following properties: (a) $E[h_i]=p_i$,
(b) \mbox{$P(h_i>0)=\sqrt{\sigm(a_i)}$}, (c) for any differentiable function $f$
of $h_i$, we have
\[
   E[f(h_i)] = f(p_i) + o(\sqrt{p_i})
\]
as $p_i\rightarrow 0$, and 
where the expectations and probability are over the injected noise $b_i$, given $a_i$.
\end{proposition}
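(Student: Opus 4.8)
The plan is to exploit the fact that, conditioned on $a_i$, the STS output $h_i$ is supported on just two points. Since $b_i$ is Bernoulli with success probability $\sqrt{p_i}$, the variable $h_i = b_i\sqrt{p_i}$ equals $\sqrt{p_i}$ with probability $\sqrt{p_i}$ and equals $0$ with probability $1-\sqrt{p_i}$. All three parts then reduce to manipulating this elementary two-point distribution, where $p_i=\sigm(a_i)$ is treated as a fixed constant.

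For part (a) I would compute $E[h_i] = \sqrt{p_i}\,E[b_i] = \sqrt{p_i}\cdot\sqrt{p_i} = p_i$, using that the mean of a Bernoulli variable equals its success probability. Part (b) is immediate: since $\sqrt{p_i}>0$ whenever $p_i>0$, the event $\{h_i>0\}$ coincides with $\{b_i=1\}$, whose probability is $\sqrt{p_i}=\sqrt{\sigm(a_i)}$.

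The substantive part is (c). First I would write the expectation exactly over the two-point support, $E[f(h_i)] = (1-\sqrt{p_i})\,f(0) + \sqrt{p_i}\,f(\sqrt{p_i})$. Setting $s=\sqrt{p_i}$, so that $s\to 0^+$ as $p_i\to 0$, I would Taylor-expand the one nontrivial term using differentiability of $f$ at $0$, namely $f(s) = f(0) + f'(0)\,s + o(s)$. Substituting and collecting terms, the $\pm s\,f(0)$ contributions cancel, leaving $E[f(h_i)] = f(0) + f'(0)\,s^2 + o(s^2)$. I would then compare against the target $f(p_i)=f(s^2) = f(0) + f'(0)\,s^2 + o(s^2)$, obtained by expanding $f$ about $0$ with the increment $s^2=p_i$. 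Subtracting the two expansions cancels the $f(0)$ and $f'(0)s^2$ terms, yielding $E[f(h_i)] - f(p_i) = o(s^2) = o(p_i)$, which is in particular $o(\sqrt{p_i})$ as claimed.

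The one point requiring care, and the main obstacle, is the asymptotic bookkeeping in (c): because the active value $\sqrt{p_i}$ and its probability $\sqrt{p_i}$ are both first order in $s$, the naive first-order-in-$s$ contributions to $E[f(h_i)]$ cancel exactly, so one must carry the expansion to second order in $s$ to recover agreement with $f(p_i)$. It is precisely this cancellation that makes the multiplicative $b_i\sqrt{p_i}$ construction approximate the deterministic map $a_i\mapsto f(p_i)$ to the stated order. I would emphasize that only differentiability of $f$ at $0$ is needed for the $o(\sqrt{p_i})$ statement; no second derivative is required, and in fact the argument delivers the slightly stronger error term $o(p_i)$.
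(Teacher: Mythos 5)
Your proof is correct; parts (a) and (b) match the paper's, but your argument for (c) takes a genuinely different route and in fact proves more. The paper expands $f(\sqrt{p_i})$ and $f(0)$ around the mean $x=p_i=E[h_i]$, so that the first-derivative contributions cancel exactly ($\sqrt{p_i}(\sqrt{p_i}-p_i)-(1-\sqrt{p_i})p_i=0$) and $E[f(h_i)]=f(p_i)+o(\sqrt{p_i})$ drops out directly. You instead expand everything around the fixed point $0$ in the variable $s=\sqrt{p_i}$, observe the cancellation at order $s$, and then compare the resulting $f(0)+f'(0)s^2+o(s^2)$ with the same expansion of $f(p_i)=f(s^2)$. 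This buys two things. First, a sharper error term: you obtain $E[f(h_i)]-f(p_i)=o(p_i)$, strictly stronger than the claimed $o(\sqrt{p_i})$. Second, weaker hypotheses and more rigor: your two expansions are around the fixed center $0$, so they follow from mere differentiability of $f$ at $0$, whereas the paper's expansions are around the moving center $p_i$, and the $o(\cdot)$ bookkeeping there implicitly requires some uniformity, e.g.\ $f\in C^1$ near $0$. Indeed, for a differentiable but non-$C^1$ function such as $f(x)=x^2\sin(1/x^2)$ with $f(0)=0$, the paper's intermediate identity $f(0)=f(p_i)-p_i f'(p_i)+o(p_i)$ fails, since $p_i f'(p_i)$ contains an oscillating term $-2\cos(1/p_i^2)$ that does not vanish; your proof still goes through and shows the proposition remains true in this case. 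Incidentally, the remark the paper makes after its proof, about expanding around an arbitrary $x\le\sqrt{p_i}$, essentially contains your choice $x=0$ as a special case, but it does not take the final step of comparing $f(0)+p_i f'(0)$ with $f(p_i)$ via differentiability at $0$, and that step is exactly what yields your stronger $o(p_i)$ bound.
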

\begin{proof}
Statements (a) and (b) are immediately derived from the definitions
in Eq.~\ref{eq:sts-unit} by noting that $E[b_i]=P(b_i>0)=P(h_i>0)=\sqrt{p_i}$.
Statement (c) is obtained by first writing out the expectation,
\[
  E[f(h_i)] = \sqrt{p_i}f(\sqrt{p_i})+(1-\sqrt{p_i})f(0)
\]
and then performing a Taylor expansion of $f$ around $p_i$ as $p_i \rightarrow 0$):
\begin{eqnarray*}
  f(\sqrt{p_i}) &=& f(p_i) + f'(p_i)(\sqrt{p_i}-p_i) + o(\sqrt{p_i}-p_i) \\
  f(0) &=& f(p_i) + f'(p_i)(-p_i) + o(p_i) 
\end{eqnarray*}
where $\frac{o(x)}{x} \rightarrow 0$ as $x \rightarrow 0$, so we obtain
\begin{eqnarray*}
 E[f(h_i)] &=& \sqrt{p_i}(f(p_i)+f'(p_i)(\sqrt{p_i}-p_i))+(1-\sqrt{p_i})(f(p_i) + f'(p_i)(-p_i)) + o(\sqrt{p_i}) \\
   &=& \sqrt{p_i} f(p_i)+f'(p_i)(p_i-p_i\sqrt{p_i})
     +f(p_i)(1-\sqrt{p_i}) - f'(p_i)(p_i - p_i \sqrt{p_i}) + o(\sqrt{p_i}) \\
   &=& f(p_i) + o(\sqrt{p_i}) 
\end{eqnarray*}
using the fact that $\sqrt{p_i}>p_i$, $o(p_i)$ can be replaced by $o(\sqrt{p_i})$.
\end{proof}
Note that this derivation can be generalized to an expansion around $f(x)$ for
any $x\leq \sqrt{p_i}$, yielding
\[
   E[f(h_i)] = f(x) + (x-p_i)f'(x) + o(\sqrt{p_i}) 
\]
where the effect of the first derivative is canceled out when we choose $x=p_i$.
It is also a reasonable choice to expand around $p_i$ because $E[h_i]=p_i$.

\vs{3}
\section{Unbiased Estimator of Gradient for Stochastic Binary Neurons}
\label{sec:unbiased}
\vs{2}

The above proposals cannot deal with non-linearities like
the indicator function that have a derivative of 0 almost everywhere.
Let us consider this case now, where we want some component of our model
to take a hard binary decision but allow this decision to be stochastic,
with a probability that is a continuous function of some 
quantities, through parameters that we wish to learn. 
We will also assume that many such decisions can be taken
in parallel with independent noise sources $z_i$ driving the stochastic
samples. Without loss of generality, we consider here a
set of binary decisions, i.e., the setup corresponds to
having a set of stochastic binary neurons, whose output $h_i$
influences an observed future loss $L$. In the framework of
Eq.~\ref{eq:noisy-output}, we could have for example
\begin{equation}
\label{eq:stochastic-binary-neuron}
  h_i = f(a_i, z_i) = \one_{z_i > \sigm(a_i)}
\end{equation}
where $z_i \sim U[0,1]$ is uniform and $\sigm(u)=1/(1+\exp(-u))$
is the sigmoid function. 
We would ideally like to estimate
how a change in $a_i$ would impact $L$ in average over the
noise sources, so as to be able to propagate this estimated
gradients into parameters and inputs of the stochastic neuron.
\begin{theorem}
\label{thm:unbiased-estimator}
Let $h_i$ be defined as in Eq.~\ref{eq:stochastic-binary-neuron},
with $L=L(h_i,c_i,c_{-i})$ a loss that depends stochastically on $h_i$, $c_i$ the
noise sources that influence $a_i$, and $c_{-i}$ those that do not
influence $a_i$,
then \mbox{$\hat{g}_i=(h_i - \sigm(a_i)) \times L$}
is an unbiased estimator of $g_i=\frac{\partial
  E_{z_i,c_{-i}}[L|c_i]}{\partial a_i}$ where the expectation is over $z_i$
and $c_{-i}$, conditioned on the set of
noise sources $c_i$ that influence $a_i$.
\vs{3}
\end{theorem}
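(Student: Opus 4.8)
The plan is to recognize $\hat g_i$ as the score-function (likelihood-ratio) estimator, which the statement itself flags as a special case of REINFORCE. First I would condition on the noise sources $c_i$ that influence $a_i$, so that $a_i$ is a fixed number, and note that given $c_i$ the binary variable $h_i$ (driven by $z_i$) is independent of the remaining noise $c_{-i}$. Writing $\ell(h)\defeq E_{c_{-i}}[L(h,c_i,c_{-i})\mid c_i]$ for the loss averaged over $c_{-i}$ at a fixed neuron value $h\in\{0,1\}$, and using that $h_i$ is Bernoulli with $E[h_i]=\sigm(a_i)$ (which I would verify first from the definition, the firing probability being $\sigm(a_i)$), the target collapses to
\[
 E_{z_i,c_{-i}}[L\mid c_i] = \sigm(a_i)\,\ell(1) + (1-\sigm(a_i))\,\ell(0).
\]

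Next I would differentiate this expression with respect to $a_i$, using $\frac{d\,\sigm(a_i)}{da_i}=\sigm(a_i)(1-\sigm(a_i))$ and noting that $\ell(0),\ell(1)$ carry no $a_i$-dependence; this gives $g_i=\sigm(a_i)(1-\sigm(a_i))(\ell(1)-\ell(0))$. On the other side I would compute $E[\hat g_i\mid c_i]$ directly from $\hat g_i=(h_i-\sigm(a_i))L$, splitting over the two outcomes of $h_i$ and averaging $L$ over $c_{-i}$: the $h_i=1$ outcome contributes $\sigm(a_i)(1-\sigm(a_i))\,\ell(1)$ and the $h_i=0$ outcome contributes $-(1-\sigm(a_i))\sigm(a_i)\,\ell(0)$, whose sum is exactly $g_i$, establishing unbiasedness. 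A cleaner packaging I would also record uses the Bernoulli score: from $\log P(h_i)=h_i\log\sigm(a_i)+(1-h_i)\log(1-\sigm(a_i))$ one finds $\frac{\partial\log P(h_i)}{\partial a_i}=h_i-\sigm(a_i)$, so that $\hat g_i=L\,\frac{\partial\log P(h_i)}{\partial a_i}$ is manifestly the likelihood-ratio estimator, and the standard identity $\frac{\partial}{\partial a_i}E[L]=E\!\left[L\,\frac{\partial\log P(h_i)}{\partial a_i}\right]$ yields the claim in one line.

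The main obstacle is not the algebra but getting the conditioning structure right: I must use that $z_i$ (hence $h_i$) is independent of $c_{-i}$ given $c_i$ so the joint expectation factorizes through $\ell$, and that $a_i$ is a function of $c_i$ alone, so that under differentiation only the Bernoulli weights $\sigm(a_i)$ and $1-\sigm(a_i)$ vary. Because $h_i$ has finite, two-point support, interchanging differentiation and summation is trivially valid, which is precisely why---unlike the SPSA and Fiete--Seung estimators discussed earlier---no assumption that the perturbation be small is required.
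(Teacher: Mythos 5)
Your proof is correct and follows essentially the same route as the paper's: condition on $c_i$, expand the expected loss over the two outcomes of the Bernoulli variable $h_i$ (independent of $c_{-i}$ given $c_i$), differentiate the sigmoid weights, and check that the conditional expectation of $(h_i-\sigm(a_i))L$ matches term by term. The likelihood-ratio packaging you append is exactly the REINFORCE connection the paper itself points out immediately after the theorem, so it is a nice summary rather than a different argument.
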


\begin{proof}
We will compute the expected value of the estimator and verify
that it equals the desired derivative. The set of all noise sources
in the system is $\{z_i\} \cup c_i \cup c_{-i}$. We can consider
$L$ to be an implicit deterministic function of all the noise sources
$(z_i,c_i,c_{-i})$, $E_{v_z}[\cdot]$ denotes the expectation over variable $v_z$,
while $E[\cdot|v_{z}]$ denotes
the expectation over all the other random variables besides $v_z$, i.e., conditioned on $v_Z$.
\begin{align}
 E[L|c_i] &= E_{c_{-i}}[E_{z_i}[L(h_i,c_i,c_{-i})]] \nonumber \\
                    &= E_{c_{-i}}[E_{z_i}[h_i L(1,c_i,c_{-i})+(1-h_i) L(0,c_i,c_{-i})]] \nonumber \\
                    &=E_{c_{-i}}[P(h_i=1|a_i) L(1,c_i,c_{-i})+P(h_i=0|a_i) L(0,c_i,c_{-i})] \nonumber \\
                    &=E_{c_{-i}}[\sigm(a_i) L(1,c_i,c_{-i})+(1-\sigm(a_i)) L(0,c_i,c_{-i})] 
\end{align}
Since $a_i$ does not influence $P(c_{-i})$, differentiating with respect to $a_i$ gives
\begin{align}
 g_i \defeq \frac{\partial E[L|c_i]}{\partial a_i} &= 
   E_{c_{-i}}[\frac{\partial \sigm(a_i)}{\partial a_i} L(1,c_i,c_{-i})-
            \frac{\partial \sigm(a_i)}{\partial a_i} L(0,c_i,c_{-i}) | c_i]  \nonumber \\
 &=  E_{c_{-i}}[\sigm(a_i)(1-\sigm(a_i))(L(1,c_i,c_{-i})-L(0,c_i,c_{-i}) | c_i] 
\label{eq:gradient}
\end{align}
First consider that since  $h_i \in \{0,1\}$,
\[
 L(h_i,c_i,c_{-i}) = h_i L(1,c_i,c_{-i}) + (1-h_i) L(0,c_i,c_{-i})
\]
$h_i^2=h_i$ and $h_i(1-h_i)=0$, so
\begin{align}
\hat{g}_i \defeq (h_i - \sigm(a_i)) L(h_i,c_i,c_{-i}) &= h_i(h_i -
\sigm(a_i)) L(1,c_i,c_{-i}) \\
& \hspace{1cm} + (h_i-\sigm(a_i))(1-h_i) L(0,c_i,c_{-i})) \nonumber \\
  &= h_i(1 - \sigm(a_i)) L(1,c_i,c_{-i}) \\
& \hspace{1cm} - (1-h_i) \sigm(a_i) L(0,c_i,c_{-i}).
\end{align}
Now let us consider the expected value of the estimator $\hat{g}_i=(h_i - \sigm(a_i)) L(h_i,c_i,c_{-i})$.
\begin{align}{ll}
 E[\hat{g}_i] &= 
   E[h_i(1 - \sigm(a_i)) L(1,c_i,c_{-i}) - (1-h_i) \sigm(a_i) L(0,c_i,c_{-i})] \nonumber \\
  &=    E_{c_i,c_{-i}}[\sigm(a_i)(1 - \sigm(a_i)) L(1,c_i,c_{-i}) - (1-\sigm(a_i)) \sigm(a_i) L(0,c_i,c_{-i})] \nonumber \\
  &=    E_{c_i,c_{-i}}[\sigm(a_i)(1 - \sigm(a_i)) (L(1,c_i,c_{-i}) -L(0,c_i,c_{-i}))] 
\end{align}
which is the same as Eq.~\ref{eq:gradient}, i.e., the expected value of the
estimator equals the gradient of the expected loss, $E[\hat{g}_i]=g_i$.
\end{proof}
This estimator is a special case of the REINFORCE
algorithm when the stochastic unit is a Bernoulli with probability
given by a sigmoid~\citep{Williams-1992}. In the REINFORCE paper, Williams
shows that if the stochastic action $h$ is sampled with probability
$p_\theta(h)$ and yields a reward $R$, then 
\[
  E_h[(R-b) \cdot  \frac{\partial \log p_\theta(h)}{\partial \theta}] = 
  \frac{\partial E_h[R]}{\partial \theta}
\]
where $b$ is an arbitrary constant,
i.e., the sampled value $h$ can be seen as a weighted maximum
likelihood target for the output distribution $p_\theta(\cdot)$,
with the weights $R-b$ proportional to the reward. The additive
normalization constant $b$ does not change the expected gradient,
but influences its variance, and an optimal choice can be computed,
as shown below.

\begin{corollary}
\label{eq:corollary}
Under the same conditions as Theorem~\ref{thm:unbiased-estimator},
and for any (possibly unit-specific) constant $\bar{L}_i$ 
the centered estimator $(h_i - \sigm(a_i))(L - \bar{L}_i)$,
is also an unbiased estimator of 
$g_i=\frac{\partial E_{z_i,c_{-i}}[L|c_i]}{\partial a_i}$. 
Furthermore, among all possible values of $\bar{L}_i$, the minimum 
variance choice is 
\begin{equation}
\label{eq:opt-L}
 \bar{L}_i = \frac{E[(h_i-\sigm(a_i))^2 L]}{E[(h_i-\sigm(a_i))^2]},
\end{equation}
which we note is a weighted average of the loss values $L$, whose
weights are specific to unit $i$.
\end{corollary}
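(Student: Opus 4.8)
The plan is to treat $\bar{L}_i$ as a control-variate baseline and to exploit the fact, already implicit in the REINFORCE identity quoted above, that the multiplying factor $h_i-\sigm(a_i)$ has zero conditional mean. First I would dispatch unbiasedness. Because $\bar{L}_i$ is a constant (it does not depend on the noise sources), linearity of expectation gives
\[
  E[(h_i-\sigm(a_i))(L-\bar{L}_i)] = E[(h_i-\sigm(a_i))L] - \bar{L}_i\, E[h_i-\sigm(a_i)].
\]
The first term equals $g_i$ by Theorem~\ref{thm:unbiased-estimator}, and the second vanishes: given $c_i$ the activation $a_i$ is fixed and $h_i$ is Bernoulli with mean $\sigm(a_i)$, so $E[h_i-\sigm(a_i)\mid c_i]=0$, and averaging over $c_i$ preserves this. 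Hence the centered estimator has expectation $g_i$ for \emph{every} value of $\bar{L}_i$.

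Next, for the minimum-variance claim I would use precisely the fact that the mean is pinned at $g_i$ independently of $\bar{L}_i$: minimizing the variance $E[\hat g_i^2]-g_i^2$ is therefore equivalent to minimizing the second moment alone. Writing $s_i \defeq h_i-\sigm(a_i)$, I would expand
\[
  Q(\bar{L}_i) \defeq E[s_i^2\,(L-\bar{L}_i)^2]
  = E[s_i^2 L^2] - 2\bar{L}_i\, E[s_i^2 L] + \bar{L}_i^2\, E[s_i^2],
\]
a quadratic in $\bar{L}_i$ with nonnegative leading coefficient $E[s_i^2]$. Setting $Q'(\bar{L}_i)=0$ yields $\bar{L}_i = E[s_i^2 L]/E[s_i^2]$, which is exactly Eq.~\ref{eq:opt-L} once $s_i$ is written out, and the second derivative $2E[s_i^2]\ge 0$ confirms that this stationary point is the global minimizer.

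The only point demanding care is the bookkeeping of the conditioning. For the cross term to factor out in the unbiasedness step, $\bar{L}_i$ must be a fixed scalar rather than a function of the sampled noise, and the expectations appearing in the optimal baseline should be read under the same conditioning convention as in Theorem~\ref{thm:unbiased-estimator}. I expect no genuine obstacle beyond keeping this convention consistent; the argument is the standard optimal-baseline (control-variate) computation for score-function estimators, here specialized to the Bernoulli score $h_i-\sigm(a_i)$.
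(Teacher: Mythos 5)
Your proposal is correct and follows essentially the same route as the paper: unbiasedness via the zero-mean factor $E[h_i-\sigm(a_i)\mid c_i]=0$ applied to the decomposition into the uncentered estimator plus the baseline term, and the optimal baseline by reducing variance minimization to minimizing $E[(h_i-\sigm(a_i))^2(L-\bar{L}_i)^2]$ and solving the first-order condition. Your only departures are cosmetic: you minimize the quadratic directly (the paper detours through a quantity $\Delta$ equal to the variance reduction before arriving at the same minimization), and you add the nonnegative-second-derivative check for global optimality that the paper leaves implicit.
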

\begin{proof}
The centered estimator $(h_i - \sigm(a_i))(L - \bar{L}_i)$ can be decomposed
into the sum of the uncentered estimator $\hat{g}_i$ and the term
$(h_i - \sigm(a_i))\bar{L}_i$.  Since $E_{z_i}[h_i|a_i]=\sigm(a_i)$, 
$E[\bar{L}_i(h_i - \sigm(a_i))|a_i]=0$, so that
the expected value of the centered estimator equals the
expected value of the uncentered estimator. By
Theorem~\ref{thm:unbiased-estimator} (the uncentered estimator is
unbiased), the centered estimator is therefore also unbiased,
which completes the proof of the first statement.

Regarding the optimal choice of $\bar{L}_i$,
first note that the variance of the uncentered estimator is
\[
 Var[(h_i - \sigm(a_i))L]=E[(h_i - \sigm(a_i))^2L^2] - E[\hat{g}_i]^2.
\]
Now let us compute the variance of the centered estimator:
\begin{align}
  Var[(h_i - \sigm(a_i))(L - \bar{L}_i)] &= 
  E[(h_i - \sigm(a_i))^2(L - \bar{L}_i)^2] - E[(h_i-\sigma(a_i))(L-\bar{L}_i)]^2 \nonumber \\
 &= E[(h_i - \sigm(a_i))^2L^2] + E[(h_i-\sigm(a_i))^2\bar{L}_i^2] \nonumber \\
   & - 2 E[(h_i-\sigm(a_i))^2L\bar{L}_i] -(E[\hat{g}_i]-0)^2 \nonumber \\
 &= Var[(h_i - \sigm(a_i))L] - \Delta 
\end{align}
where $\Delta=2 E[(h_i-\sigm(a_i))^2 L\bar{L}_i] - E[(h_i-\sigm(a_i))^2\bar{L}_i^2]$.
Let us rewrite $\Delta$:
\begin{align}
 \Delta &= 2 E[(h_i-\sigm(a_i))^2 L \bar{L}_i] - E[(h_i-\sigm(a_i))^2\bar{L}_i^2] \nonumber \\
      &= E[(h_i-\sigm(a_i))^2 \bar{L}_i(2L - \bar{L}_i)] \nonumber \\
   &= E[(h_i-\sigm(a_i))^2 (L^2 - (L-\bar{L}_i)^2)] 
\end{align}
$\Delta$ is maximized (to minimize variance of the estimator)
when $E[(h_i-\sigm(a_i))^2 (L-\bar{L}_i)^2]$ is minimized. Taking the
derivative of that expression with respect to $\bar{L}_i$, we obtain
\[
\vs{2}
  2 E[(h_i-\sigm(a_i))^2 (\bar{L}_i-L)] =0
\]
which, as claimed, is achieved for 
\[
\vs{3}
  \bar{L}_i = \frac{E[(h_i-\sigm(a_i))^2 L]}{E[(h_i-\sigm(a_i))^2]}.
\vs{2}
\]
\vspace*{-4mm}
\end{proof}
Note that a general formula for the lowest variance estimator
for REINFORCE~\citep{Williams-1992} had already been introduced in the reinforcement
learning context by \citet{Weaver+Tao-UAI2001},
which includes the above result as a special case. This followed from
previous work~\citep{Dayan-1990} for the case of binary immediate reward.

Practically, we could get the lowest variance estimator (among all choices of the $\bar{L}_i$)
by keeping track of two numbers (running or moving averages) 
for each stochastic neuron, one for the numerator
and one for the denominator of the unit-specific $\bar{L}_i$ in Eq.~\ref{eq:opt-L}.
This would lead the lowest-variance estimator $(h_i - \sigm(a_i))(L - \bar{L}_i)$.
Note how the unbiased estimator only requires broadcasting $L$ throughout the
network, no back-propagation and only local computation.
Note also how this could be applied even with an estimate
of future rewards or losses $L$, as would be useful in the context of
reinforcement learning (where the actual loss or reward will be measured farther
into the future, much after $h_i$ has been sampled).

\section{Straight-Through Estimator}

Another estimator of the expected gradient through stochastic neurons
was proposed by \citet{Hinton-Coursera2012}
in his lecture 15b. The idea is simply to back-propagate
through the hard threshold function (1 if the argument is positive, 0 otherwise)
as if it had been the identity function. It is clearly a biased estimator,
but when considering a single layer of neurons, it has the right sign
(this is not guaranteed anymore when back-propagating through more hidden layers).
We call it the {\bf straight-through} (ST) estimator. A possible variant
investigated here multiplies the gradient on $h_i$ by the derivative of the
sigmoid. Better results were actually obtained without multiplying by
the derivative of the sigmoid. With $h_i$ sampled as per Eq.~\ref{eq:stochastic-binary-neuron},
the straight-through estimator  of the gradient of the loss $L$ with respect to the
pre-sigmoid activation $a_i$ is thus 
\begin{equation}
  g_i = \frac{\partial L}{\partial h_i}.
\end{equation}
Like the other estimators, it is then back-propagated to obtain gradients on the parameters
that influence $a_i$.

\begin{figure}[ht]
\centering
\vs{4}
\includegraphics[width=0.85\textwidth]{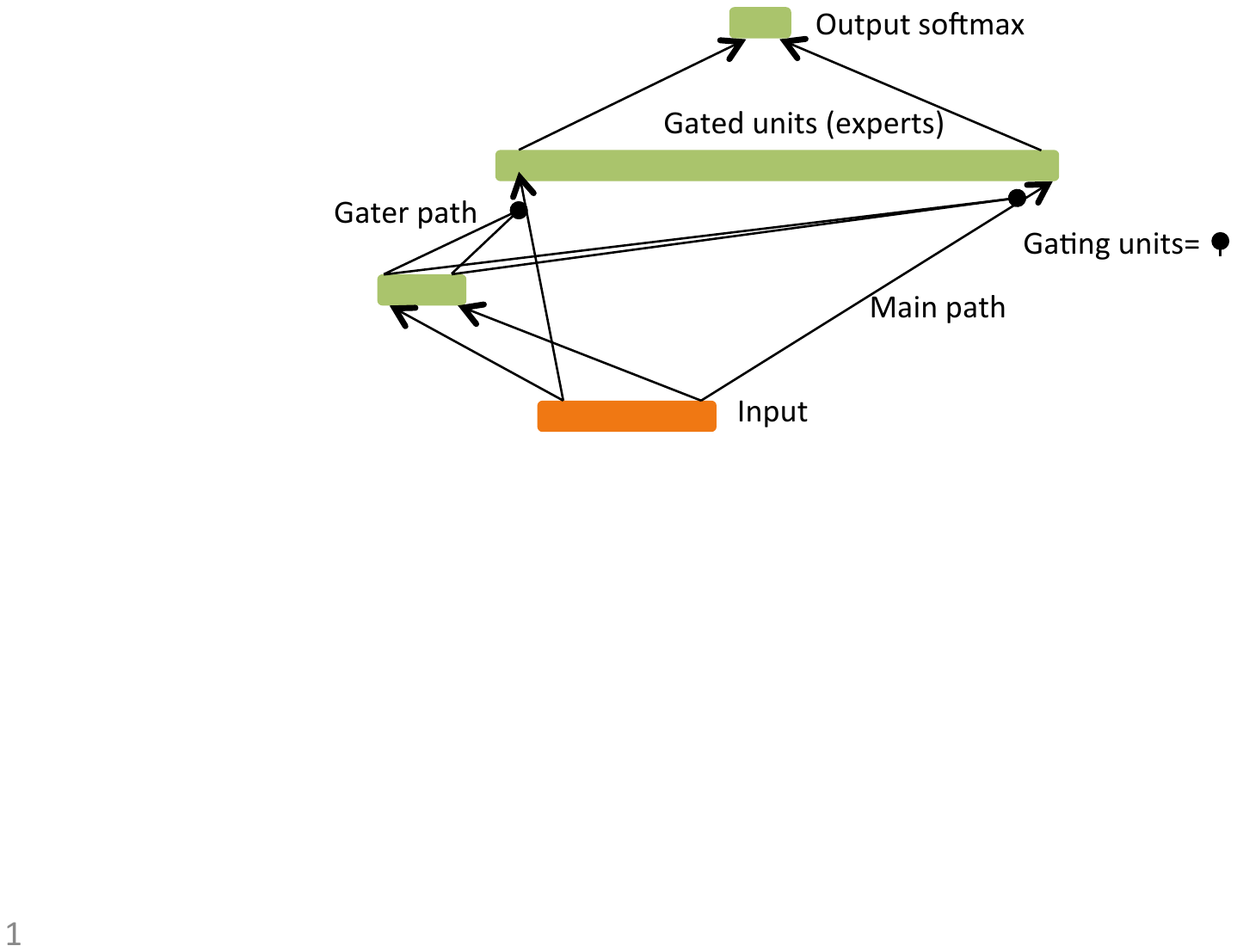}
\vs{3}
\caption{Conditional computation architecture: gater path on the left
produces sparse outputs $h_i$ which are multiplied elementwise by
the expert outputs, hidden units $H_i$.}
\label{fig:arch}
\vs{2}
\end{figure}

\vs{3}
\section{Conditional Computation Experiments}
\vs{2}

We consider the potential application of stochastic neurons in the
context of conditional computation, where the stochastic neurons are
used to select which parts of some computational graph should be
actually computed, given the current input. The particular architecture we
experimented with is a neural network with a large hidden layer whose
units $H_i$ will be selectively turned off by gating units $h_i$,
i.e., the output of the i-th hidden unit is $H_i h_i$, as illustrated
in Figure~\ref{fig:arch}.
For this to make sense in the context of conditional computation, we
want $h_i$ to be non-zero only a small fraction $\alpha$ of the time (10\% in the
experiments) while the amount of computation required to compute $h_i$
should be much less than the amount of computation required to compute $H_i$.
In this way, we can first compute $h_i$ and only compute $H_i$ if $h_i \neq 0$,
thereby saving much computation. To achieve this, we connect the previous
layer to $h_i$ through a bottleneck layer. Hence if the previous layer
(or the input) has size $N$ and the main path layer also has size $N$
(i.e., $i \in \{1,\ldots N\}$), and the gater's bottleneck layer has size $M \ll N$,
then computing the gater output is $O(MN)$, which can reduce the main path
computations from $O(N^2)$ to $O(\alpha N^2)$.

\begin{figure}[ht]
\centering
\vs{3}
\begin{minipage}{.4\textwidth}
\hspace*{-5mm}\includegraphics[width=1.3\textwidth]{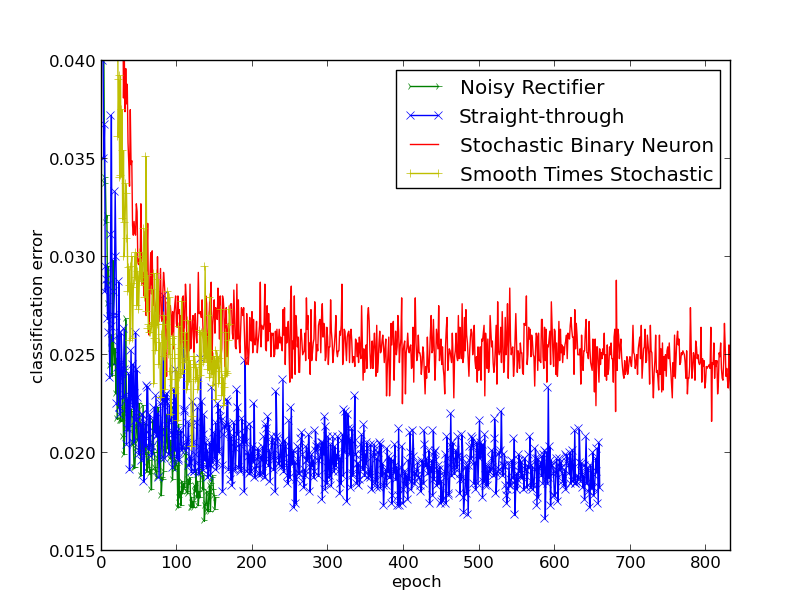}
\end{minipage} \hspace*{-1mm} %
\begin{minipage}{.59\textwidth}
  \centering
{\small
\hspace*{4mm}  \begin{tabular}{|l|ccc|}
    \hline
    & train & valid & test \\
    \hline
    Noisy Rectifier & 6.7e-4 & 1.52 & 1.87 \\
    Straight-through & 3.3e-3  & 1.42 & 1.39 \\
    Smooth Times Stoch. & 4.4e-3 & 1.86 & 1.96 \\
    Stoch. Binary Neuron & 9.9e-3 & 1.78 & 1.89 \\
    \hline
    Baseline Rectifier & 6.3e-5 & 1.66 & 1.60 \\
    Baseline Sigmoid+Noise & 1.8e-3 & 1.88 & 1.87\\
    Baseline Sigmoid &3.2e-3 & 1.97 & 1.92\\
    \hline
  \end{tabular}
}
\end{minipage} 
\vs{5}
\caption{\small Left: Learning curves for the various stochastic approaches applied to the MNIST dataset. The y-axis indicates classification error on the validation set. Right: Comparison of training criteria  and valid/test classification error (\%) for stochastic (top) and baseline (bottom) gaters. }
\label{fig:curves+results}
\vs{1}
\end{figure}

Experiments are performed with a gater of 400 hidden units and 2000 output
units. The main path also has 2000 hidden units. A sparsity constraint is imposed
on the 2000 gater output units such that each is non-zero 10\% of the time,
on average. The validation and test errors of the stochastic models
are obtained after optimizing a threshold in their deterministic counterpart used during testing.
See the appendix in supplementary material for more details on
the experiments.

For this experiment, we compare with 3 baselines. The Baseline Rectifier is
just like the noisy rectifier, but with 0 noise, and is also
constrained to have a sparsity of 10\%. The Baseline Sigmoid is like the
STS and ST models in that the gater uses sigmoids for its output and tanh
for its hidden units, but has no sparsity constraint and only 200 output
units for the gater and main part. The baseline has the same hypothetical
resource constraint in terms of computational efficiency (run-time), but
has less total parameters (memory). The Baseline Sigmoid with Noise is the
same, but with Gaussian noise added during training.


\vs{3}
\section{Conclusion}
\vs{2}

In this paper, we have motivated estimators of the gradient through highly
non-linear non-differentiable functions (such as those corresponding to an
indicator function), especially in networks involving noise sources, such
as neural networks with stochastic neurons. They can be useful as
biologically motivated models and they might be useful for engineering
(computational efficiency) reasons when trying to reduce computation via
conditional computation or to reduce interactions between parameters via
sparse updates~\citep{Bengio-arxiv-2013}. We have proven interesting properties
of three classes of stochastic neurons, the noisy rectifier, the STS unit,
and the binary stochastic neuron, in particular showing the existence
of an unbiased estimator of the gradient for the latter. 
Unlike the SPSA~\citep{Spall-1992} estimator, our estimator is unbiased
even though the perturbations are not small (0 or 1), and it multiplies
by the perturbation rather than dividing by it.

Experiments show that all the tested methods actually allow training to
proceed.  It is interesting to note that the gater with noisy rectifiers
yielded better results than the one with the non-noisy baseline
rectifiers. Similarly, the sigmoid baseline with noise performed better
than without, {\em even on the training objective}: all these results
suggest that injecting noise can be useful not just as a regularizer but
also to help explore good parameters and fit the training objective.  These
results are particularly surprising for the stochastic binary neuron, which
does not use any backprop for getting a signal into the gater, and opens
the door to applications where no backprop signal is available. In terms of
conditional computation, we see that the expected saving is achieved,
without an important loss in performance. Another surprise is the good
performance of the straight-through units, which provided the best
validation and test error, and are very simple to implement.

\ifnipsfinal
\subsubsection*{Acknowledgments}

The authors would like to acknowledge the useful comments from Guillaume
Alain and funding from NSERC, Ubisoft, CIFAR (YB is a CIFAR Fellow),
and the Canada Research Chairs.
\fi


\newpage

\bibliography{strings,strings-shorter,ml,aigaion-shorter}
\bibliographystyle{natbib}

\newpage

\ifnipsfinal
\appendix
\newif\ifsupplementary
\supplementaryfalse

\section{Details of the Experiments}

We frame our experiments using a conditional computation architecture. We
limit ourselves to a simple architecture with 4 affine transforms. The
output layer consists of an affine transform followed by softmax over the
10 MNIST classes. The input is sent to a gating subnetwork and to an experts
subnetwork, as in Figure
\ifsupplementary
1 of the main paper.
\else
\ref{fig:arch}.
\fi
There is one gating unit per expert unit, and the expert units are hidden
units on the main path. Each gating unit has a possibly stochastic
non-linearity (different under different algorithms evaluated here)
applied on top of an affine transformation of the gater path hidden layer
(400 tanh units that follow another affine transformation applied on the input).
The gating non-linearity are either Noisy Rectifiers, Smooth
Times Stochastic (STS), Stochastic Binary Neurons (SBN), Straight-through
(ST) or (non-noisy) Rectifiers over 2000 units.

The expert hidden units are obtained through an
affine transform without any non-linearity, which makes this part a simple linear
transform of the inputs into 2000 expert hidden units. Together, the gater and expert form a
conditional layer. These could be stacked to create deeper architectures,
but this was not attempted here. In our case, we use but one conditional
layer that takes its input from the vectorized 28x28 MNIST images. The
output of the conditional layer is the element-wise multiplication of the
(non-linear) output of the gater with the (linear) output of the
expert. The idea of using a linear transformation for the expert is derived
from an analogy over rectifiers which can be thought of as the product of a non-linear
gater ($\one_{h_i>0}$) and a linear expert ($h_i$).

\subsection{Sparsity Constraint}
Computational efficiency is gained by imposing a sparsity constraint on the
output of the gater. All experiments aim for an average sparsity of 10\%, such that
for 2000 expert hidden units we will only require computing approximately 200 of
them in average. Theoretically, efficiency can be
gained by only propagating the input activations to the selected expert
units, and only using these to compute the network output. 
For imposing the sparsity constraint we use a KL-divergence criterion for
sigmoids 
and an L1-norm criterion for rectifiers, where the amount of penalty
is adapted to achieve the target level of average sparsity.

A sparsity target of $s=0.1$ over units $i=1...2000$, each such unit having
a mean activation of $p_i$ within a mini-batch of 32 propagations, yields
the following KL-divergence training criterion:
\begin{equation}
{\rm KL}(s||p) = -\lambda \sum_i{\left(s\log{p_i} + (1-s)\log{1-p_i}\right)}
\end{equation}
where $\lambda$ is a hyper-parameter that can be optimized through
cross-validation. In the case of rectifiers, we use an L1-norm training
criteria:
\begin{equation}
{\rm L1}(p) = \lambda \sum{|p_i|}
\end{equation}
In order to keep the effective sparsity $s_e$ (the average proportion of
non-zero gater activations in a batch) of the rectifier near the target
sparsity of $s=0.1$, $\lambda$ is increased when $s_e > s+0.01$, and
reduced when $s_e < s-0.01$. This simple approach was found to be effective
at maintaining the desired sparsity.

\subsection{Beta Noise}
There is a tendency for the KL-divergence criterion to keep the sigmoids around the target
sparsity of $s=0.1$. This is not the kind of behavior desired from a gater, it indicates
indecisiveness on the part of the gating units. What we would hope to see is each unit maintain a 
mean activation of $0.1$ by producing sigmoidal values over 0.5 approximately 10\% of the time. 
This would allow us to round sigmoid values to zero or one at test time.

In order to encourage this behavior, we introduce noise at the input of the
sigmoid, as in the semantic hashing
algorithm~\citep{Salakhutdinov+Geoff-2009}. However, since we impose a
sparsity constraint of 0.1, we have found better results with noise sampled from a Beta
distribution (which is skewed) instead of a Gaussian distribution.  To do this we limit ourselves to
hyper-optimizing the $\beta$ parameter of the distribution, and make the $\alpha$
parameter a function of this $\beta$ such that the mode of the distribution is
equal to our sparsity target of 0.1. Finally, we scale the samples from the
distribution such that the sigmoid of the mode is equal to 0.1. We have
observed that in the case of the STS, introducing noise with a $\beta$ of
approximately 40.1 works bests. We considered values of 1.1, 2.6, 5.1,
10.1, 20.1, 40.1, 80.1 and 160.1. We also considered Gaussian noise and
combinations thereof. We did not try to introduce noise into the SBN or ST
sigmoids since they were observed to have good behavior in this regard.

\subsection{Test-time Thresholds} 

Although injecting noise is useful during training, we found that better
results could be obtained by using a deterministic computation at test time,
thus reducing variance, in a spirit of dropout~\citep{Hinton-et-al-arxiv2012}.
Because of the sparsity constraint with a target of 0.1, simply thresholding
sigmoids at 0.5 does not yield the right proportion of 0's. Instead, we
optimized a threshold to achieve the target proportion of 1's (10\%)
when running in deterministic mode.

\subsection{Hyperparameters}
The noisy rectifier was found to work best with a standard deviation of 1.0
for the gaussian noise. The stochastic half of the Smooth Times Stochastic
is helped by the addition of noise sampled from a beta distribution, as
long as the mean of this distribution replaces it at test time. The Stochastic 
Binary Neuron required a
learning rate 100 times smaller for the gater (0.001) than for the main
part (0.1). The Straight-Through approach worked best without multiplying
the estimated gradient by the derivative of the sigmoid,
i.e. estimating $\frac{\partial L}{\partial a_i}$ by $\frac{\partial L}{\partial  h_i}$ 
where $h_i=\one_{z_i>\sigm(a_i)}$ instead of 
$\frac{\partial L}{\partial h_i}(1-\sigm(a_i))\sigm(a_i) $. Unless indicated otherwise, we
used a learning rate of 0.1 throughout the architecture.

We use momentum for STS, have found that it has a negative effect for SBN, and has little to no effect on the ST and Noisy Rectifier units. In all cases we explored using hard constraints on the maximum norms of incoming weights to the neurons. We found that imposing a maximum norm of 2 works best in most cases.

\fi

\end{document}